\def\reals{{\mathbb R}}
\def\pr{\mathbb{P}}
\newcommand{\ind}{\mathds{1}}
\def\calA{\mathcal{A}}
\def\cA{\mathcal{A}}
\def\cU{\mathcal{U}}
\def\calU{\mathcal{U}}
\def\reals{{\mathbb R}}
\def\R{\mathbb{R}}
\title{Learning to Prune: Speeding up Repeated Computations}
\date{\today}
\author{
Daniel Alabi \\ \small Harvard University \\ \small \texttt{alabid@g.harvard.edu}
\and 
Adam Tauman Kalai \\ \small Microsoft Research \\ \small \texttt{noreply@microsoft.com}
\and 
Katrina Ligett \\ \small Hebrew University \\ \small \texttt{katrina@cs.huji.ac.il}
\and 
Cameron Musco \\ \small Microsoft Research \\ \small \texttt{camusco@microsoft.com} 
\and 
Christos Tzamos \\ \small University of Wisconsin-Madison \\ \small \texttt{tzamos@wisc.edu}
\and 
Ellen Vitercik\\ \small Carnegie Mellon University \\ \small \texttt{vitercik@cs.cmu.edu}}
\begin{document}

\maketitle

\begin{abstract}
It is common to encounter situations where one must solve a sequence of similar computational problems. Running a standard algorithm with worst-case runtime guarantees on each instance will fail to take advantage of valuable structure shared across the problem instances. For example, when a commuter drives from work to home, there are typically only a handful of routes that will ever be the shortest path. A na\"ive algorithm that does not exploit this common structure may spend most of its time checking roads that will never be in the shortest path. More generally, we can often ignore large swaths of the search space that will likely never contain an optimal solution. 

We present an algorithm that learns to maximally prune the search space on repeated computations, thereby reducing runtime while provably outputting the correct solution each period with high probability. Our algorithm employs a simple explore-exploit technique resembling those used in online algorithms, though our setting is quite different. We prove that, with respect to our model of pruning search spaces, our approach is optimal up to constant factors. Finally, we illustrate the applicability of our model and algorithm to three classic problems: shortest-path routing, string search, and linear programming. We present experiments confirming that our simple algorithm is effective at significantly reducing the runtime of solving repeated computations.
\end{abstract}

\section{Introduction}\label{sec:intro}
Consider computing the shortest path from home to work every morning. The shortest path may vary from day to day---sometimes side roads beat the highway; sometimes the bridge is closed due to construction. However, although San Francisco and New York are contained in the same road network, it is unlikely that a San Francisco-area commuter would ever find New York along her shortest path---the edge times in the graph do not change \emph{that} dramatically from day to day.

With this motivation in mind, we study a learning problem where the goal is to speed up repeated computations when the sequence of instances share common substructure. Examples include repeatedly computing the shortest path between the same two nodes on a graph with varying edge weights, repeatedly computing string matchings,
and repeatedly solving linear programs with mildly varying objectives. Our work is in the spirit of recent work in data-driven algorithm selection \citep[e.g.,][]{GuptaR17,Balcan17:Learning,Balcan18:Learning,Balcan18:Dispersion} and online learning  \citep[e.g.,][although with some key differences, which we discuss below]{cesa2006prediction}.

The basis of this work is the observation that for many realistic instances of repeated problems, vast swaths of the search space may never contain an optimal solution---perhaps the shortest path is always contained in a specific region of the road network; large portions of a DNA string may never contain the patterns of interest; a few key linear programming constraints may be the only ones that bind.
Algorithms designed to satisfy worst-case guarantees may thus waste substantial computation time on futile searching. For example, even if a single, fixed path from home to work were best every day, Dijkstra's algorithm would consider all nodes within distance $d_i$ from home on day $i$, where $d_i$ is the length of the optimal path on day $i$, as illustrated in Figure~\ref{fig:city}.
\begin{figure}
\begin{center}
\includegraphics[width=\textwidth]{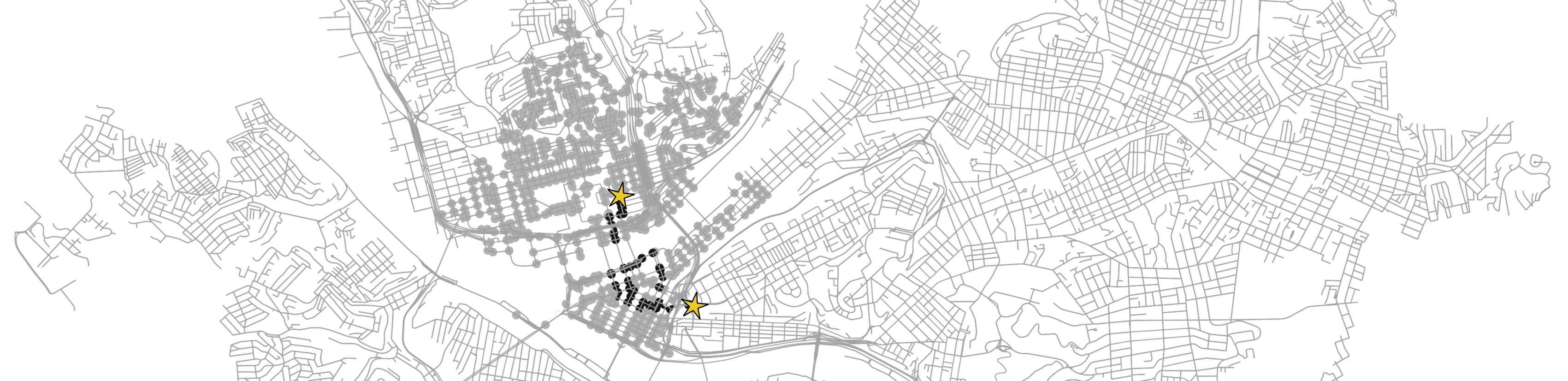}
\end{center}
\caption{A standard algorithm computing the shortest path from the upper to the lower star will explore many nodes (grey), even nodes in the opposite direction. Our algorithm learns to prune to a subgraph (black) of nodes that have been included in prior shortest paths.\label{fig:city}}
\end{figure}

We develop a simple solution, inspired by online learning, that leverages this observation to the maximal extent possible. On each problem, our algorithm typically searches over a small, pruned subset of the solution space, which it learns over time. This pruning is the minimal subset containing all previously returned solutions. These rounds are analogous to ``exploit'' rounds in online learning. To learn a good subset, our algorithm occasionally deploys a worst-case-style algorithm, which explores a large part of the solution space and guarantees correctness on any instance. These rounds are analogous to ``explore'' rounds in online learning. If, for example, a single fixed path were always optimal, our algorithm would almost always immediately output that path, as it would be the only one in its pruned search space. Occasionally, it would run a full Dijkstra's computation to check if it should expand the pruned set. Roughly speaking, we prove that our algorithm's solution is almost always correct, but its cumulative runtime is not much larger than that of running an optimal algorithm on the maximally-pruned search space in hindsight. Our results hold for worst-case sequences of problem instances, and we do not make any distributional assumptions.

In a bit more detail, let $f:X \rightarrow Y$ be a function that takes as input a problem instance $x \in X$ and returns a solution $y \in Y$. Our algorithm receives a sequence of inputs from $X$. Our high-level goal is to correctly compute $f$ on almost every round while minimizing runtime. For example, each $x \in X$ might be a set of graph edge weights for some fixed graph $G = (V,E)$ and $f(x)$ might be the shortest $s$-$t$ path for some vertices $s$ and $t$. Given a sequence $x_1, \dots, x_T \in X$, a worst-case algorithm would simply compute and return $f(x_i)$ for every instance $x_i$. However, in many application domains, we have access to other functions mapping $X$ to $Y$, which are faster to compute. These simpler functions are defined by subsets $S$ of a universe $\calU$ that represents the entire search space. 
We call each subset a ``pruning'' of the search space. For example, in the shortest paths problem, $\calU$ equals the set $E$ of edges and a pruning $S \subset E$ is a subset  of the edges. The function corresponding to $S$, which we denote $f_S: X \to Y$, also takes as input edge weights $x$, but returns the shortest path from $s$ to $t$ using only edges from the set $S$. By definition, the function that is correct on every input is $f = f_{\calU}$. 
We assume that for every $x$, there is a set $S^*(x) \subseteq \mathcal{U}$ such that $f_S(x)=f(x)$ if and only if $S \supseteq S^*(x)$ -- a mild assumption we discuss in more detail later on.

Given a sequence of inputs $x_1, \dots, x_T$, our algorithm returns the value $f_{S_i}(x_i)$ on round $i$, where $S_i$ is chosen based on the first $i$ inputs $x_1, \dots, x_i$. Our goal is two fold: first, we hope to minimize the size of each $S_i$ (and thereby maximally prune the search space), since $|S_i|$ is often monotonically related to the runtime of computing $f_{S_i}(x_i)$. For example, a shortest path computation will typically run faster if we consider only paths that use  a small subset of edges. To this end, we prove that if $S^*$ is the smallest set such that $f_{S^*}(x_i) = f(x_i)$ for all $i$ (or equivalently,\footnote{We explain this equivalence in Lemma~\ref{lem:Sstar}.} $S^* = \bigcup_{i = 1}^T S^*(x_i)$), then \[\E\left[\frac{1}{T}\sum_{i = 1}^T|S_i|\right] \leq |S^*| + \frac{|\calU| - |S^*|}{\sqrt{T}},\] where the expectation is over the algorithm's randomness. At the same time, we seek to minimize the the number of mistakes the our algorithm makes (i.e., rounds $i$ where $f(x_i) \not= f_{S_i}(x_i)$). We prove that the expected fraction of rounds $i$ where $f_{S_i}(x_i) \not= f(x_i)$ is
$O(|S^*|/\sqrt{T})$. Finally, the expected runtime\footnote{As we will formalize, when determining $S_1,\dots,S_T$, our algorithm must compute the smallest set $S$ such that $f_S(x_i) = f(x_i)$ on some of the inputs $x_i$. In all of the applications we discuss, the total runtime required for these computations is upper bounded by the total time required to compute $f_{S_i}(x_i)$ for $i \in [T].$} of the algorithm is the expected time required to compute $f_{S_i}(x_i)$ for $i \in [T]$, plus $O(|S^*| \sqrt{T})$ expected time to determine the subsets $S_1, \dots, S_T$.

We instantiate our algorithm and corresponding theorem in three diverse settings---shortest-path routing, linear programming, and string matching---to illustrate the flexibility of our approach. We present experiments on real-world maps and economically-motivated linear programs. In the case of shortest-path routing, our algorithm's performance is illustrated in Figure~\ref{fig:city}. Our algorithm explores up to five times fewer nodes on average than Dijkstra's algorithm, while sacrificing accuracy on only a small number of rounds. In the case of linear programming, when the objective function is perturbed on each round but the constraints remain invariant, we show that it is possible to significantly prune the constraint matrix, allowing our algorithm to make fewer simplex iterations to find solutions that are nearly always optimal.

\subsection{Related work}
\label{sec:rw}
Our work advances a recent line of research studying the foundations of algorithm configuration. Many of these works study a distributional setting~\citep{AilonCCLMS11,ClarksonMS14,GuptaR17,Kleinberg17:Efficiency,Balcan17:Learning,Balcan18:Learning,Balcan18:Dispersion,Weisz18:LEAPSANDBOUNDS}: there is a distribution over problem instances and the goal is to use a set of samples from this distribution to determine an algorithm from some fixed class with the best expected performance. In our setting, there is no distribution over instances: they may be adversarially selected.
Additionally, we focus on quickly computing solutions for polynomial-time-tractable problems rather than on developing algorithms for NP-hard problems, which have been the main focus of prior work.

Several works have also studied online algorithm configuration without distributional assumptions from a theoretical perspective~\citep{GuptaR17,Cohen-Addad17:Online,Balcan18:Dispersion}. Before the arrival of any problem instance, the learning algorithm fixes a class of algorithms to learn over. The classes  of algorithms that \citet{GuptaR17}, and \citet{Cohen-Addad17:Online}, and \citet{Balcan18:Dispersion} study are infinite, defined by real-valued parameters. The goal is to select parameters at each timestep while minimizing regret. These works provide conditions under which it is possible to design algorithms achieving sublinear regret. These are conditions on the cost functions mapping the real-valued parameters to the algorithm's performance on any  input. In our setting, the choice of a pruning $S$ can be viewed as a parameter, but this parameter is combinatorial, not real-valued, so the prior analyses do not apply.

Several works have studied how to take advantage of structure shared over a sequence of repeated computations for specific applications, including linear programming~\citep{Banerjee15efficientlysolving} and matching~\citep{Deb06fastmatching}. As in our work, these algorithms have full access to the problem instances they are attempting to solve. These approaches are quite different (e.g., using machine classifiers) and highly tailored to the application domain, whereas we provide a general algorithmic framework and instantiate it in several different settings.

Since our algorithm receives input instances in an online fashion and makes no distributional assumptions on these instances, our setting is reminiscent of online optimization. However, unlike the typical online setting, we observe each input $x_i$ \emph{before} choosing an output $y_i$. Thus, if runtime costs were not a concern, we could always return the best output for each input. We seek to trade off correctness for lower runtime costs.
In contrast, in online optimization, one must commit to an output $y_i$ before seeing each input $x_i$, in both the full information and bandit settings \citep[see, e.g.,][]{KalaiV05,AwerbuchK08}. In such a setting, one cannot hope to return the best $y_i$ for each $x_i$ with significant probability. Instead, the typical goal is that the performance over all inputs should compete with the performance of the best fixed output in hindsight.

\section{Model}\label{sec:model}
We start by defining our model of repeated computation.
Let $X$ be an abstract set of problem instances and let $Y$ be a set of possible solutions. We design an algorithm that operates over $T$ rounds: on round $i$, it receives an instance $x_i \in X$ and returns some element of $Y$. 

\begin{definition}[Repeated algorithm]
Over $T$ rounds, a repeated algorithm $\mathcal{A}$ encounters a sequence of inputs
$x_1, x_2, \ldots,x_T \in X$. On round $i$, after receiving input $x_i$, it outputs
$\calA(x_{1:i})\in Y$, where $x_{1:i}$ denotes the sequence $x_1, \dots, x_i$. A repeated algorithm may maintain a state from period to period, and thus $\mathcal{A}(x_{1:i})$ may potentially depend on all of $x_1,...,x_i$.
\end{definition}

We assume each problem instance $x \in X$ has a unique correct solution (invoking tie-breaking assumptions as necessary; in Section \ref{sec:multiple}, we discuss how to handle problems that admit multiple solutions). We denote the mapping from instances to correct solutions as $f:X \rightarrow Y$.
For example, in the case of shortest paths, we fix a graph $G$ and a  pair $(s,t)$ of source and terminal nodes. Each instance $x \in X$ represents a weighting of the graph's edges. The set $Y$ consists of all paths from $s$ to $t$ in $G$. Then
$f(x)$ returns the shortest path from $s$ to $t$ in $G$, given the edge weights $x$ (breaking ties according to some canonical ordering of the elements of $Y$, as discussed in Section \ref{sec:multiple}).
To measure correctness, we use a \textit{mistake bound model} \citep[see, e.g.,][]{Littlestone87}.

\begin{definition}[Repeated algorithm mistake bound]
  The mistake bound
  of the repeated algorithm $\calA$ given inputs $x_1, \dots, x_T$ is
  \[M_T(\calA, x_{1:T}) = \E\left[\sum_{i=1}^T\ind[\calA(x_{1:i}) \neq f(x_i)]\right],\] where the expectation is over the algorithm's random choices.
\end{definition}
To minimize the number of mistakes, the na\"ive algorithm would simply compute the function $f(x_i)$ at every round $i$. However, in our applications, we will have the option of computing other functions mapping the set $X$ of inputs to the set $Y$ of outputs that are faster to compute than $f$. Broadly speaking, these simpler functions are defined by subsets $S$ of a universe $\calU$, or ``prunings'' of $\calU$. For example, in the shortest paths problem, given a fixed graph $G = (V,E)$ as well as source and terminal nodes $s, t \in V$, the universe is the set of edges, i.e., $\calU = E$. Each input $x$ is a set of edge weights and $f(x)$ computes the shortest $s$-$t$ path in $G$ under the input weights. The simpler function corresponding to a subset $S \subset E$ of edges also takes as input weights $x$, but it returns the shortest path from $s$ to $t$ using only edges from the set $S$ (with $f_S(x) = \bot$ if no such path exists). Intuitively, the universe $\calU$ contains all the information necessary to compute the correct solution $f(x)$ to any input $x$, whereas the function corresponding to a subset $S \subset \calU$ can only compute a subproblem using information restricted to $S$.

Let $f_S: X \to Y$ denote the function corresponding to the set $S \subseteq \calU$. We make two natural assumptions on these functions. First, we assume the function corresponding to the universe $\calU$ is always correct. Second, we assume there is a unique smallest set $S^*(x) \subseteq \calU$ that any pruning must contain in order to correctly compute $f(x)$. These assumptions are summarized below.

\begin{assumption}\label{assumption}
For all $x \in X$, $f_{\calU}(x) = f(x)$. Also, there exists a unique smallest set $S^*(x) \subseteq \calU$ such that $f_{\calU}(x) = f_S(x)$ if and only if $S^*(x) \subseteq S.$
\end{assumption}

Given a sequence of inputs $x_1, \dots, x_T$, our algorithm returns the value $f_{S_i}(x_i)$ on round $i$, where the choice of $S_i$ depends on the first $i$ inputs $x_1, \dots, x_i$.
In our applications, it is typically faster to compute $f_S$ over $f_{S'}$ if $|S| < |S'|$.
Thus, our goal is to minimize the number of mistakes the algorithm makes while simultaneously minimizing $\E\left[\sum|S_i|\right]$. Though we are agnostic to the specific runtime of computing each function $f_{S_i}$, minimizing $\E\left[\sum|S_i|\right]$ roughly amounts to minimizing the search space size and our algorithm's runtime in the applications we consider.

We now describe how this model can be instantiated in three classic settings: shortest-path routing, string search, and linear programming.

\paragraph{Shortest-path routing.}
In the repeated shortest paths problem, 
we are given a graph $G=(V, E)$ (with static structure) and a fixed pair $s,t \in V$ of source and terminal nodes. In period
$i\in[T]$, the algorithm receives a nonnegative weight assignment $x_i:E\rightarrow\reals_{\geq 0}$. Figure~\ref{fig:whole} illustrates
the pruning model applied to
the repeated shortest paths problem. 

\begin{figure}
\centering
\begin{subfigure}[t]{.47\textwidth}
\includegraphics[scale=1]{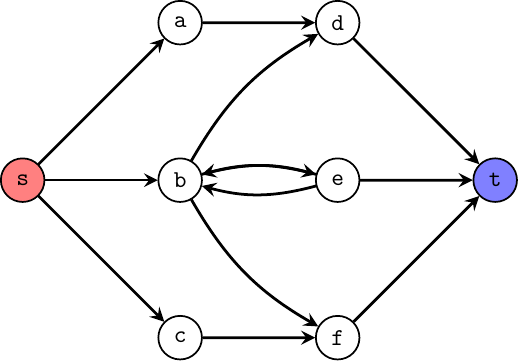}\centering
\caption{Original unweighted graph $G$.}\label{fig:a}
\end{subfigure}\qquad
\begin{subfigure}[t]{.47\textwidth}
\includegraphics[scale=1]{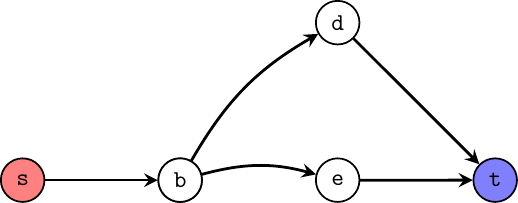}\centering
\caption{Optimal pruning.}\label{fig:b}
\end{subfigure}
\caption{Repeated shortest paths and optimal pruning of a graph $G$. If the shortest path was always $s$-$b$-$e$-$t$ or $s$-$b$-$d$-$t$, it would be unnecessary to search the entire graph for each instance.}
\label{fig:whole}
\end{figure}

For this problem, the universe is the edge set (i.e., $\calU = E$) and $S$ is a subset of edges in
the graph. The set $X$ consists of all possible weight assignments to edges in the graph $G$ and $Y\subseteq2^E \cup \left\{\bot\right\}$ is the
set of all paths in the graph, with $\bot$ indicating that no path exists. The function $f(x)$ returns the shortest $s$-$t$ path in $G$ given edge weights $x$.
For any $S\subseteq\calU$, the function $f_S: X\rightarrow Y$ computes the shortest $s$-$t$ path on the subgraph induced by the edges in $S$ (breaking ties by a canonical edge ordering). If $S$ does not include any $s$-$t$ path, we define $f_S(x) = \bot$.
Part 1 of Assumption~\ref{assumption} holds because $\calU = E$, so $f_{\calU}$ computes the shortest path on the entire graph. Part 2 of Assumption~\ref{assumption} also holds: since $f_S: X\rightarrow Y$ computes the shortest $s$-$t$ path on the subgraph induced by the edges in $S$ (breaking ties by some canonical edge ordering), we can see that  $f_S(x_i) = S^*(x_i)$ if and only if $S^*(x_i) \subseteq S$. To ``canonicalize'' the algorithm so there is always a unique solution, we assume there is a given ordering on edges and that ties are broken lexicographically according to the path description. This is easily achieved by keeping the heap maintained by Dijkstra's algorithm sorted not only by distances but also lexicographically.

\paragraph{Linear programming.}
We consider computing $\argmax_{\vec{y}  \in \R^n} \left\{\vec{x}^T\vec{y}: A\vec{y}\leq \vec{b}\right\}$, where we assume that
$(A, \vec{b})\in \reals^{m\times n} \times \reals^m$ is fixed across all times steps but the vector $\vec{x}_i \in X \subseteq \reals^n$ defining the objective function $\vec{x}_i^T\vec{y}$ may differ for each $i \in [T]$.
To instantiate our pruning model,
the universe $\calU = [m]$ is the set of all constraint indices and each $S \subseteq \calU$ indicates a subset of those constraints.
The set $Y$ equals $\reals^n \cup\{\bot\}$.
For simplicity, we assume that the set $X \subseteq \reals^n$ of objectives contains only directions $\vec{x}$ such that there is a unique solution $\vec{y}\in \reals^n$ that is the intersection of exactly $n$ constraints in $A$. This avoids both dealing with solutions that are the intersection of more than $n$ constraints and directions that are under-determined and have infinitely-many solutions forming a facet. See Section \ref{sec:multiple} for a discussion of this issue in general.

Given $\vec{x} \in \reals^n$, the function $f$ computes the linear program's optimal solution, i.e.,
$f(\vec{x}) = \argmax_{\vec{y}\in\reals^n} \left\{\vec{x}^T\vec{y}
 : A\vec{y}\leq \vec{b}\right\}.$
For a subset of constraints $S\subseteq\calU$, the function
$f_S$ computes the optimal solution restricted to those constraints, i.e.,
$f_S(\vec{x}) = \argmax_{\vec{y}\in\reals^n}\{\vec{x}^T\vec{y}
: A_S\vec{y}\leq \vec{b}_S\},$
where $A_S\in\reals^{|S|\times n}$ is the submatrix of $A$ consisting of the rows indexed by elements of $S$ and $\vec{b}_S\in\reals^{|S|}$ is the vector $\vec{b}$ with indices restricted to elements of $S$. We further write $f_S(\vec{x})=\bot$ if there is no unique solution to the linear program (which may happen for small sets $S$ even if the whole LP does have a unique solution).
Part 1 of Assumption~\ref{assumption} holds because $A_\calU = A$ and $\vec{b}_{\calU} = \vec{b}$, so it is indeed the case that $f_{\calU} = f$. To see why part 2 of Assumption~\ref{assumption} also holds, suppose that $f_S(\vec{x}) = f(\vec{x})$. If $f(\vec{x}) \not= \bot$, the vector $f_S(\vec{x})$ must be the intersection of exactly $n$ constraints in $A_S$, which by definition are indexed by elements of $S$. This means that $S^*(\vec{x}) \subseteq S$.

\paragraph{String search.}
In string search, the goal is to find the location of a short pattern in a long string. At timestep $i$, the algorithm receives a long string $q_i$ of some fixed length $n$ and a pattern $p_i$ of some fixed length $m \leq n$. We denote the long string as $q_i = \left(q_i^{(1)}, \dots, q_i^{(n)}\right)$ and the pattern as $p_i = \left(p_i^{(1)}, \dots, p_i^{(m)}\right)$.
The goal is to find an index $j\in[n-m+1]$ such that
$p_i = \left(q_i^{(j)}, q_i^{(j+1)}, \dots, q_i^{(j+m-1)}\right)$. The function $f$ returns the smallest such index $j$, or $\bot$ if there is no match.
In this setting, the set $X$ of inputs consists of all string pairs of length $n$ and $m$
(e.g., $\{A, T, G, C\}^{n \times m}$ for DNA sequences) and
the set $Y = [n-m+1]$ is the set of all possible match indices.
The universe $\calU = [n-m+1]$ also consists of all possible match indices.
For any $S \subseteq \calU$, the function $f_S(q_i, p_i)$ returns the smallest 
index $j \in S$ such that $p_i = \left(q_i^{(j)}, q_i^{(j+1)}, \dots, q_i^{(j+m-1)}\right)$, which we denote $j_i^*.$ It returns $\bot$ if there is no match.
We can see that part 1 of Assumption~\ref{assumption} holds: $f_\calU(q, p) = f(q, p)$ for all $(q, p) \in X$, since $f_{\calU}$ checks every index in $[n-m+1]$ for a match. Moreover, part 2 of Assumption~\ref{assumption} holds because $f_{\calU}(x_i) = f_S(x_i)$ if and only if $S^*(x_i) = \left\{j^*_i\right\} \subseteq S$.

\section{The algorithm}\label{sec:expansive}
We now present an algorithm
(Algorithm~\ref{alg:0}), denoted $\cA^*$, that encounters a sequence of inputs $x_1, \dots, x_T$ one-by-one. At timestep $i$, it computes the value $f_{S_i}(x_i)$, where the choice of $S_i \subseteq \calU$ depends on the first $i$ inputs $x_1, \dots, x_i$.
We prove that, in expectation, the number of mistakes it makes (i.e., rounds where $f_{S_i}(x_i) \not= f(x_i)$) is small, as is $\sum_{i = 1}^T |S_i|$.

Our algorithm keeps track of a pruning of $\calU$, which we call $\bar{S}_i$ at timestep $i$. In the first round, the pruned set is empty ($\bar{S}_1 = \emptyset$). On round $i$, with some probability $p_i$, the algorithm computes the function $f_{\calU}(x_i)$ and then computes $S^*(x_i)$, the unique smallest set that any pruning must contain in order to correctly compute $f_{\calU}(x_i)$. (As we discuss in Section~\ref{sec:instantiations}, in all of the applications we consider, computing $S^*(x_i)$ amounts to evaluating $f_{\calU}(x_i)$.) The algorithm unions $S^*(x_i)$ with $\bar{S}_i$ to create the set $\bar{S}_{i+1}$. Otherwise, with probability $1-p_i$, it outputs $f_{\bar{S}_i}(x_i)$, and does not update the set $\bar{S}_i$ (i.e., $\bar{S}_{i+1} = \bar{S}_i$). It repeats in this fashion for all $T$ rounds.

\begin{algorithm}
\caption{Our repeated algorithm $\cA^*$}
\label{alg:0}
\begin{algorithmic}[1]
\State $\bar{S}_1 \leftarrow \emptyset$
\For{$i \in \{1, \dots, T\}$}
\State Receive input $x_i\in X$.
\State With probability $p_i$, output $f_{\calU}(x_i)$. Compute $S^*(x_i)$ and set $\bar{S}_{i+1} \leftarrow \bar{S}_{i} \cup S^*(x_i)$.
\State Otherwise (with probability $1-p_i$), output $f_{\bar{S}_i}(x_i)$ and set $\bar{S}_{i+1} \leftarrow \bar{S}_{i}$.
\EndFor
\end{algorithmic}
\end{algorithm}

In the remainder of this section, we use the notation $S^*$ to denote the smallest set such that $f_{S^*}(x_i) = f(x_i)$ for all $i \in [T]$. To prove our guarantees, we use the following helpful lemma:

\begin{lemma}\label{lem:Sstar}
For any $x_1, \dots, x_T \in X$, $S^* = \bigcup_{i = 1}^T S^*(x_i).$
\end{lemma}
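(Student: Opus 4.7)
The plan is to unwind the definition of $S^*$ using Assumption~\ref{assumption} and show that the union $T^* := \bigcup_{i=1}^T S^*(x_i)$ is both a valid candidate (in the sense that $f_{T^*}(x_i)=f(x_i)$ for all $i$) and is contained in every other valid candidate; uniqueness of the minimum then forces $S^* = T^*$.

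More concretely, I would first recall the characterization from Assumption~\ref{assumption}: for each $i \in [T]$, we have $f_S(x_i) = f(x_i)$ if and only if $S^*(x_i) \subseteq S$. The forward direction of the lemma then follows immediately: since $S^*(x_i) \subseteq T^*$ for every $i$ by construction of the union, we get $f_{T^*}(x_i) = f(x_i)$ for every $i$, so $T^*$ is a set with the property that defines $S^*$. For the other direction, take any $S \subseteq \calU$ satisfying $f_S(x_i) = f(x_i)$ for all $i$; by Assumption~\ref{assumption} this forces $S^*(x_i) \subseteq S$ for every $i$, and therefore $T^* = \bigcup_i S^*(x_i) \subseteq S$. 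In particular this applies to $S = S^*$, giving $T^* \subseteq S^*$.

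Finally, combining these observations, $T^*$ itself is a valid candidate (so $|S^*| \leq |T^*|$ by minimality of $S^*$) and $T^* \subseteq S^*$, which is only consistent if $S^* = T^*$. As a byproduct, this argument also verifies that the ``smallest'' set $S^*$ is well-defined and unique: any valid candidate contains $T^*$, and $T^*$ is itself valid, so $T^*$ is the unique minimum with respect to set inclusion.

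There is essentially no obstacle here; the statement is a direct set-theoretic consequence of the ``if and only if'' in Assumption~\ref{assumption}. The only subtlety worth flagging is to be explicit that ``smallest'' means smallest with respect to inclusion (equivalently, smallest in cardinality), and to note that the proof implicitly upgrades ``smallest'' from a potentially non-unique minimal element to a unique minimum via the containment $T^* \subseteq S$ for every valid $S$.
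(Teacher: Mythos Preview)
Your proof is correct and follows essentially the same approach as the paper: both arguments use the two directions of the ``if and only if'' in Assumption~\ref{assumption} to show that the union $\bigcup_i S^*(x_i)$ is a valid candidate and is contained in any valid candidate (in particular in $S^*$), then invoke minimality to conclude equality. Your write-up is slightly more explicit about the uniqueness of the minimum, but the substance is the same.
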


\begin{proof}
First, we prove that $S^* \supseteq \cup_{i = 1}^TS^*(x_i)$. For a contradiction, suppose that for some $i \in [T]$, there exists an element $j \in S^*(x_i)$ such that $j \not \in S^*$. This means that $f_{S^*}(x_i)=f(x_i)$, but $S^*\not\supseteq S^*(x_i)$, which contradicts Assumption~\ref{assumption}: $S^*(x_i)$ is the unique smallest subset of $\calU$ such that for any set $S \subseteq \cU$, $f(x_i) = f_S(x_i)$ if and only if $S^*(x_i) \subseteq S.$ Therefore, $S^* \supseteq \cup_{i = 1}^TS^*(x_i)$. Next, let $C = \cup_{i = 1}^TS^*(x_i)$. Since $S^*(x_i) \subseteq C$, Assumption~\ref{assumption} implies that $f(x_i) = f_C(x_i)$ for all $i \in [T].$ Based on the definition of $S^*$ and the fact that $S^* \supseteq C$, we conclude that $S^* = C = \cup_{i = 1}^TS^*(x_i)$.
\end{proof}

We now provide a mistake bound for Algorithm~\ref{alg:0}.

\begin{theorem}
For any $p\in(0, 1]$ such that $p_i \geq p$ for all $i\in[T]$ and any inputs $x_1, \dots, x_T$,
Algorithm~\ref{alg:0} has a mistake bound of
\[M_T(\cA^*, x_{1:T}) \leq
\frac{|S^*|(1-p)(1 - (1-p)^T)}{p} \leq \frac{|S^*|}{p}.\]
\label{thm:1}\label{thm:mistake}
\end{theorem}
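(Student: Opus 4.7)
The plan is to bound the mistake count element-by-element: on any round a mistake can happen only during an ``exploit'' coin-flip outcome, and by Assumption~\ref{assumption} only when some $j \in S^*(x_i)$ is not yet in $\bar S_i$. Applying a union bound over such missing elements gives
\[
M_T(\cA^*, x_{1:T}) \;\le\; \sum_{i=1}^T \sum_{j \in S^*(x_i)} \Pr\!\bigl[\text{round $i$ is exploit and } j \notin \bar S_i\bigr].
\]
I would then swap the order of summation using Lemma~\ref{lem:Sstar} (so that the outer index ranges over $j \in S^* = \bigcup_i S^*(x_i)$) and handle each $j$ separately.

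Fix $j \in S^*$ and list the rounds on which $j$ appears, i.e.\ $i_1 < i_2 < \cdots < i_{k_j}$ with $j \in S^*(x_{i_\ell})$. The key structural observation is that $j$ can enter $\bar S$ \emph{only} at one of these rounds, and only if that round is an explore round. Consequently the event $\{j \notin \bar S_{i_\ell}\}$ equals the event that rounds $i_1,\dots,i_{\ell-1}$ were all exploit rounds; combined with ``round $i_\ell$ is exploit,'' this has probability
\[
\prod_{\ell'=1}^{\ell} (1 - p_{i_{\ell'}}) \;\le\; (1-p)^{\ell},
\]
by independence of the algorithm's per-round coin flips and the hypothesis $p_{i} \ge p$. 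I would emphasize that the inputs $x_1,\dots,x_T$ (and hence the sequence $i_1 < \cdots < i_{k_j}$) are deterministic, so the only randomness is the algorithm's coins, which makes this product bound clean.

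Summing a geometric series yields $\sum_{\ell=1}^{k_j}(1-p)^\ell \le \sum_{\ell=1}^{T}(1-p)^\ell = \frac{(1-p)(1-(1-p)^T)}{p}$, and then summing over $j \in S^*$ gives the claimed mistake bound. The second inequality $\frac{|S^*|(1-p)(1-(1-p)^T)}{p} \le \frac{|S^*|}{p}$ is immediate since $(1-p)(1-(1-p)^T) \le 1$. The only real subtlety is the bookkeeping in the second paragraph --- making precise that the missing-ness of element $j$ at round $i_\ell$ is completely determined by the exploit/explore outcomes on the earlier rounds $i_1,\dots,i_{\ell-1}$ --- but once stated, this is a direct consequence of the update rule for $\bar S_i$ in Algorithm~\ref{alg:0}.
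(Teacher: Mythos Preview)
Your proposal is correct and follows essentially the same approach as the paper: both decompose the mistake count element-by-element over $j \in S^*$, list for each $j$ the rounds $i_1 < \cdots < i_{k_j}$ on which $j \in S^*(x_{i_\ell})$, observe that a contribution at round $i_\ell$ requires all of $i_1,\dots,i_\ell$ to be exploit rounds, and then bound by the geometric sum $\sum_{\ell=1}^T (1-p)^\ell$. The paper packages this via a counting variable $N_T(e)$ while you use a direct union bound on the mistake indicator, but the argument is the same.
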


\begin{proof}
Let $S_1, \dots, S_T$ be the sets such that on round $i$, Algorithm~\ref{alg:0} computes the function $f_{S_i}$.
Consider any element $e\in S^*$. Let $N_T(e)$ be the number of times $e \not\in S_i$ but $e\in S^*(x_i)$ for some $i \in [T]$. In other words, $N_T(e) = \left|\left\{i : e \not\in S_i, e \in S^*(x_i)\right\}\right|$.
Every time the algorithm makes a mistake, the current set $S_i$ must not contain some $e \in S^*(x_i)$ (otherwise, $S_i \supseteq S^*(x_i)$, so the algorithm would not have made a mistake by Assumption~\ref{assumption}). This means that every time the algorithm makes a mistake, $N_T(e)$ is incremented by 1 for at least one $e \in S^* = \cup_{i = 1}^TS^*(x_i)$. Therefore,
\begin{equation}M_T(\cA^*, x_{1:T}) \leq \sum_{e\in S^*}\E[N_T(e)],\label{eq:lb}\end{equation}
where the expectation is over the random choices of 
Algorithm~\ref{alg:0}.

For any element $e\in S^*$, let $i_1, \dots, i_t$ be the iterations where for all $\ell \in [t]$, $e \in S^*\left(x_{i_\ell}\right)$. By definition, $N_T(e)$ will only be incremented on some subset of these rounds. Suppose $N_T(e)$ is incremented by 1 on round $i_r$. It must be that $e \not\in S_{i_r}$, which means $S_{i_r} \not= \calU$, and thus $S_{i_r} = \bar{S}_{i_r}$. Since $e \not\in \bar{S}_{i_r}$, it must be that $e \not\in \bar{S}_{i_\ell}$ for $\ell \leq r$ since $\bar{S}_{i_r} \supseteq \bar{S}_{i_{r-1}} \supseteq \cdots \supseteq \bar{S}_{i_1}$. Therefore, in each round $i_{\ell}$ with $\ell < r$, Algorithm~\ref{alg:0} must not have computed $S^*\left(x_{i_{\ell}}\right)$, because otherwise $e$ would have been added to the set $\bar{S}_{i_{\ell}+1}$. We can bound the probability of these bad events as \[\pr\left[S_{i_r} = \bar{S}_{i_r} \text{ and Algorithm~\ref{alg:0} does not compute  }S^*\left(x_{i_{\ell}}\right) \text{ for }\ell < r\right] = \prod_{\ell = 1}^r\left(1 - p_{i_\ell}\right) \leq (1-p)^r.\]
As a result,
\begin{equation}\E[N_T(e)] \leq \sum_{r = 1}^t (1-p)^r \leq \sum_{r=1}^{T}(1-p)^r = \frac{(1-p)(1 - (1-p)^T)}{p}.\label{eq:ub}\end{equation} The theorem statement follows by combining Equations~\eqref{eq:lb} and \eqref{eq:ub}.
\end{proof}

\begin{cor}Algorithm~\ref{alg:0} with $p_i = \frac{1}{\sqrt{i}}$ has a mistake bound of
 $M_T(\cA^*, x_{1:T}) \leq |S^*|\sqrt{T}$.
  \label{cor:1}\label{cor:mistake}
\end{cor}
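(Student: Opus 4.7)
The plan is to obtain the corollary as an immediate consequence of Theorem~\ref{thm:mistake} by choosing an appropriate uniform lower bound $p$ on the probabilities $p_i = 1/\sqrt{i}$. The only observation needed is monotonicity: the map $i \mapsto 1/\sqrt{i}$ is decreasing on $[T]$, so its minimum value over $i \in [T]$ is attained at $i = T$. Hence $p_i \geq 1/\sqrt{T}$ for every $i \in [T]$, which means the hypothesis of Theorem~\ref{thm:mistake} is satisfied with $p = 1/\sqrt{T}$.

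First I would state this monotonicity remark explicitly. Then I would invoke the second (simpler) inequality from Theorem~\ref{thm:mistake}, namely $M_T(\cA^*, x_{1:T}) \leq |S^*|/p$, and substitute $p = 1/\sqrt{T}$ to conclude that $M_T(\cA^*, x_{1:T}) \leq |S^*|\sqrt{T}$. There is no real obstacle here since all the work was done in Theorem~\ref{thm:mistake}; the corollary is just a substitution. If I wanted a slightly sharper bound, I could instead use the first inequality in Theorem~\ref{thm:mistake} with $p = 1/\sqrt{T}$, which yields $|S^*|(\sqrt{T} - 1)\bigl(1 - (1 - 1/\sqrt{T})^T\bigr)$, but the stated bound $|S^*|\sqrt{T}$ already follows from the looser form and is cleaner to write, so the proof remains a one-line deduction.
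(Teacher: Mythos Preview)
Your proposal is correct and matches the paper's intended approach: the corollary is stated without proof immediately after Theorem~\ref{thm:mistake}, and the only way to deduce it from that theorem is exactly the substitution $p = 1/\sqrt{T}$ you describe, justified by the monotonicity of $i \mapsto 1/\sqrt{i}$.
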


In the following theorem, we prove that the mistake bound in Theorem~\ref{thm:1} is nearly tight. In particular, we show that for any $k \in \{1, \dots, T\}$ there exists a random sequence of inputs $x_1, \dots, x_T$ such that $\E\left[|S^*|\right] \approx k$ and $M_T(\cA^*, x_{1:T}) = \frac{k(1-p)(1 - (1-p/k)^T)}{p}.$ This nearly matches the upper bound from Theorem~\ref{thm:1} of \[\frac{|S^*|(1-p)(1 - (1-p)^T)}{p}.\] The full proof is in Appendix~\ref{app:expansive}. In Section \ref{sec:lower}, we show that in fact, $\cA^*$ achieves a near optimal tradeoff between runtime and pruned subset size over all possible pruning-based repeated algorithms.  

 \begin{restatable}{theorem}{tight}
For any $p\in(0, 1]$, any time horizon $T$, and any $k \in \{1, \dots, T\}$, there
is a random sequence of inputs to
Algorithm~\ref{alg:0} such that \[\E[\left|S^*\right|] = k\left(1 - \left(1 - \frac{1}{k}\right)^T\right)\] and its expected mistake bound with $p_i = p$ for all $i \in [T]$ is
\[\E[M_T(\calA^*, \vec{x}_{1:T})] = \frac{k(1-p)(1 - (1-p/k)^T)}{p}.\]
The expectation is over the sequence of inputs.
\end{restatable}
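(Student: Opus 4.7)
The plan is to exhibit a random input sequence in which each round ``activates'' exactly one element of $\calU$, drawn uniformly from a pool of $k$ distinguished elements. Under such a construction, $|S^*|$ is just the number of distinct elements drawn in $T$ rounds (a coupon-collector quantity), and the mistake probability on each round factors cleanly into independent per-round events, yielding a geometric sum that matches the claimed expression.

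Concretely, I would fix an instantiation of the abstract model in which $\calU$ contains $k$ distinguished elements $e_1, \dots, e_k$ and there exist inputs $\xi_1, \dots, \xi_k \in X$ with $S^*(\xi_j) = \{e_j\}$. (Any of the three application domains admits such a construction; e.g., in string search with $n - m + 1 \geq k$, pick patterns that match at exactly one of $k$ preselected positions.) Draw $j_1, \dots, j_T$ i.i.d.\ uniformly from $[k]$ and set $x_i = \xi_{j_i}$. For the first claim, Lemma~\ref{lem:Sstar} gives $S^* = \{e_j : j \in \{j_1, \dots, j_T\}\}$, and each $e_j$ is absent with probability $(1-1/k)^T$, so linearity of expectation yields $\E[|S^*|] = k(1 - (1-1/k)^T)$.

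For the mistake bound, the key observation is that $e_j \in \bar{S}_i$ iff there exists $\ell < i$ such that round $\ell$ was an explore round (probability $p$, independently across rounds) \emph{and} $j_\ell = j$. Since the algorithm's coin flips and the input draws are mutually independent, each of the $i-1$ earlier rounds independently contributes the event $\{\text{round }\ell\text{ is exploit}\} \cup \{j_\ell \neq j\}$, which has probability $(1-p) + p(1 - 1/k) = 1 - p/k$. Hence $\pr[e_j \notin \bar{S}_i] = (1 - p/k)^{i-1}$. A mistake on round $i$ occurs iff round $i$ is an exploit round and $e_{j_i} \notin \bar{S}_i$; using that $j_i$ is independent of all earlier randomness,
\[\pr[\text{mistake on round }i] = (1-p)\sum_{j=1}^k \frac{1}{k}(1 - p/k)^{i-1} = (1-p)(1 - p/k)^{i-1}.\]
Summing the geometric series from $i=1$ to $T$ gives
\[\E[M_T(\calA^*, x_{1:T})] = (1-p)\cdot\frac{1 - (1 - p/k)^T}{p/k} = \frac{k(1-p)(1 - (1-p/k)^T)}{p}.\]

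The main subtlety I expect is carefully tracking the independence among (a) the explore/exploit indicators on each round, (b) the index $j_\ell$ drawn on each round, and (c) the index $j_i$ determining which element must lie in $\bar{S}_i$ to avoid a mistake on round $i$. Once these independences are made explicit, both the coupon-collector expectation for $|S^*|$ and the geometric-sum computation for the mistake bound are straightforward, and the bounds hold exactly (not just in expectation over the construction).
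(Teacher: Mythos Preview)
Your proposal is correct and follows essentially the same approach as the paper: both construct a sequence where each round's input has $S^*(x_i)$ equal to a singleton drawn uniformly from $k$ candidates, derive $\E[|S^*|]$ via the standard coupon-collector count, and obtain the mistake bound by summing a geometric series in $(1-p/k)$. The only cosmetic differences are that the paper instantiates the construction in the shortest-path model (a two-vertex multigraph with $k$ parallel edges) rather than string search, and computes $\pr[e_j \notin \bar S_i]$ via a binomial sum $\sum_t \binom{i-1}{t}\bigl(\tfrac{1-p}{k}\bigr)^t\bigl(1-\tfrac{1}{k}\bigr)^{i-1-t}$ that collapses to $(1-p/k)^{i-1}$, whereas you go directly to the product form by factoring over the $i-1$ independent rounds; your route is slightly cleaner but not substantively different.
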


\begin{proof}[Proof sketch]
We base this construction on shortest-path routing. There is a fixed graph $G = (V,E)$ where $V$ consists of two vertices labeled $s$ and $t$ and $E$ consists of $k$ edges labeled $\{1, \dots, k\}$, each of which connects $s$ and $t$. The set $X = \left\{\vec{x}^{(1)}, \dots, \vec{x}^{(k)}\right\}$ consists of $k$ possible edge weightings. Under the edge weights $\vec{x}^{(i)}$, the edge $i$ has a weight of 0 and all other edges $j\not= i$ have a weight of 1. We prove the theorem by choosing an input at each round uniformly at random from $X$.
\end{proof}

In Theorem~\ref{thm:1}, we bounded the expected number of mistakes
Algorithm~\ref{alg:0} makes.
Next, we bound $\E\left[\frac{1}{T}\sum |S_i|\right]$, where $S_i$ is the set such that Algorithm~\ref{alg:0} outputs $f_{S_i}(x_i)$ in round $i$ (so either $S_i=\bar{S}_i$ or $S_i=\calU$, depending on the algorithm's random choice). In our applications, minimizing $\E\left[\frac{1}{T}\sum |S_i|\right]$ means minimizing the search space size, which roughly amounts to minimizing the average expected runtime of Algorithm~\ref{alg:0}.

\begin{theorem}
  For any inputs $x_1, \dots, x_T$, let $S_1, \dots, S_T$ be the sets such that on round $i$, Algorithm~\ref{alg:0} computes the function $f_{S_i}$. Then \[\E\left[\frac{1}{T}\sum_{i = 1}^T|S_i|\right] \leq |S^*| + \frac{1}{T}\sum_{i = 1}^T p_i(|\calU| - |S^*|),\]
  where the randomness is over the coin tosses of
  Algorithm~\ref{alg:0}. 
  \label{thm:rr}
\end{theorem}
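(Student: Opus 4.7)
The plan is to bound each $\E[|S_i|]$ individually and then sum up. First I would condition on the outcome of Algorithm~\ref{alg:0}'s coin toss at round $i$: with probability $p_i$ we have $S_i = \calU$, so $|S_i| = |\calU|$, while with probability $1-p_i$ we have $S_i = \bar{S}_i$. Thus
\[
\E[|S_i|] \;=\; p_i\,|\calU| \;+\; (1-p_i)\,\E[|\bar{S}_i|],
\]
and the whole theorem reduces to showing $\E[|\bar S_i|] \leq |S^*|$.

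The key structural fact is that $\bar{S}_i \subseteq S^*$ deterministically. By construction, $\bar{S}_i$ is obtained by starting with $\emptyset$ and, on every previous round $j < i$ where the algorithm chose to explore, unioning in $S^*(x_j)$. By Lemma~\ref{lem:Sstar}, each $S^*(x_j) \subseteq \bigcup_{k=1}^T S^*(x_k) = S^*$, so the union $\bar{S}_i$ of any subcollection of these sets is also contained in $S^*$. Hence $|\bar{S}_i| \leq |S^*|$ with probability $1$.

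Plugging this in,
\[
\E[|S_i|] \;\leq\; p_i\,|\calU| \;+\; (1-p_i)\,|S^*| \;=\; |S^*| \;+\; p_i\bigl(|\calU| - |S^*|\bigr).
\]
Averaging over $i \in [T]$ yields the stated bound. There is really no substantive obstacle: the entire argument hinges on the monotone inclusion $\bar S_1 \subseteq \bar S_2 \subseteq \cdots$ together with the fact, provided by Lemma~\ref{lem:Sstar}, that every ingredient $S^*(x_j)$ added into $\bar S_i$ already lives inside $S^*$. Everything else is linearity of expectation.
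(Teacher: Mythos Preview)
Your proposal is correct and follows essentially the same argument as the paper: condition on the round-$i$ coin toss to write $\E[|S_i|] = p_i|\calU| + (1-p_i)\E[|\bar S_i|]$, observe that $\bar S_i \subseteq S^*$ deterministically, and sum. Your invocation of Lemma~\ref{lem:Sstar} to justify $\bar S_i \subseteq S^*$ is slightly more explicit than the paper's one-line assertion, but the approach is identical.
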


\begin{proof} 
We know that for all $i$, $S_i = \calU$ with probability $p_i$ and $S_i = \bar{S}_i$ with probability $1-p_i$. Therefore, \[\E\left[\sum_{i = 1}^T |S_i|\right] = \sum_{i = 1}^T \E[|S_i|]= \sum_{i = 1}^T p_i |\calU| + (1-p_i) \E[|\bar{S}_i|] \leq \sum_{i = 1}^T p_i |\calU| + (1-p_i) |S^*|,\] where the final inequality holds because $\bar{S}_i \subseteq S^*$ for all $i \in [T]$.
\end{proof}

If we set $p_i = 1/\sqrt{i}$ for all $i$, we have the following corollary, since $\sum_{i=1}^T p_i \le 2 \sqrt{T}$.
\begin{cor}Given a set of inputs $x_1, \dots, x_T$, let $S_1,...,S_T$ be the sets such that on round $i$, Algorithm~\ref{alg:0} computes the function $f_{S_i}$. If $p_i = \frac{1}{\sqrt{i}}$ for all $i\in[T]$, then
\[\E\left [\frac{1}{T}\sum_{i=1}^T |S_i|\right ] \le |S^*| + \frac{2(|\mathcal{U}|-|S^*|)}{\sqrt{T}},\]
where the expectation is over the random choices of
Algorithm~\ref{alg:0}.
  \label{cor:pruning_size}
\end{cor}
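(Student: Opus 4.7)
The plan is to obtain this simply as a substitution into Theorem~\ref{thm:rr}, together with the elementary bound $\sum_{i=1}^T i^{-1/2} \leq 2\sqrt{T}$ that the paper already flags in the lead-in sentence.

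First I would invoke Theorem~\ref{thm:rr} verbatim: for any inputs $x_1,\dots,x_T$, if $S_1,\dots,S_T$ are the sets chosen by Algorithm~\ref{alg:0}, then
\[
\E\!\left[\frac{1}{T}\sum_{i=1}^T |S_i|\right] \;\leq\; |S^*| + \frac{|\calU|-|S^*|}{T}\sum_{i=1}^T p_i.
\]
Now I would specialize to $p_i = 1/\sqrt{i}$ and replace $\sum_{i=1}^T p_i$ by an upper bound obtained by comparing the sum to an integral: since $x \mapsto x^{-1/2}$ is decreasing on $(0,\infty)$, $\sum_{i=1}^T i^{-1/2} \leq \int_0^T x^{-1/2}\,dx = 2\sqrt{T}$. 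Substituting this into the displayed inequality gives precisely $|S^*| + \frac{2(|\calU|-|S^*|)}{\sqrt{T}}$, which is the claim.

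There is no real obstacle here; the work was already done in Theorem~\ref{thm:rr}, and the only remaining step is the standard integral comparison for $\sum i^{-1/2}$. The one thing to be slightly careful about is that $|\calU|-|S^*| \geq 0$ (so that replacing $\sum p_i$ by an upper bound preserves the inequality), but this holds trivially since $S^* \subseteq \calU$ by Assumption~\ref{assumption}.
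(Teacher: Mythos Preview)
Your proposal is correct and matches the paper's approach exactly: the paper simply states that the corollary follows from Theorem~\ref{thm:rr} together with $\sum_{i=1}^T p_i \leq 2\sqrt{T}$, and you have filled in the integral-comparison justification for that bound. Your remark that $|\calU|-|S^*|\geq 0$ is needed for the substitution to preserve the inequality is a nice bit of care the paper leaves implicit.
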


\subsection{Instantiations of Algorithm~\ref{alg:0}}\label{sec:instantiations}
We now revisit and discuss instantiations of Algorithm~\ref{alg:0} for the three applications outlined in Section~\ref{sec:model}: shortest-path routing, linear programming, and string search. For each problem, we describe how one might compute the sets $S^*(x_i)$ for all $i \in [T].$

\paragraph{Shortest-path routing.} 
In  this setting, the algorithm computes the true shortest path $f(x)$ using, say, Dijkstra's shortest-path algorithm, and the set $S^*(x)$ is simply the union of edges in that path.
Since $S^* = \cup_{i = 1}^T S^*(x_i)$,  the mistake bound of $\left|S^*\right|\sqrt{T}$ given by Corollary~\ref{cor:mistake} is particularly strong when the shortest path does not vary much from day to day. Corollary~\ref{cor:pruning_size} guarantees that the average edge set size run through Dijkstra's algorithm is at most $|S^*| + \frac{2(|E|-|S^*|)}{\sqrt{T}}$. Since the worst-case running time of Dijkstra's algorithm on a graph $G' = (V', E')$ is $\tilde{O}(|V'| + |E'|)$, minimizing the average edge set size is a good proxy for minimizing runtime.

\paragraph{Linear programming.}
In the context of linear programming, computing the set $S^*(\vec{x}_i)$ is equivalent to computing $f(\vec{x})$ and returning the set of tight constraints.
Since $S^* = \cup_{i = 1}^T S^*(\vec{x}_i)$, the mistake bound of $|S^*| \sqrt{T}$  given 
by Corollary~\ref{cor:mistake} is strongest when the same constraints are tight across most timesteps. Corollary~\ref{cor:pruning_size} guarantees that the average constraint set size considered in each round is at most $|S^*| + \frac{2(m-|S^*|)}{\sqrt{T}}$, where $m$ is the total number of constraints. Since many well-known solvers take time polynomial in $|S_i|$ to compute $f_{S_i}$, minimizing $\E\left [\sum |S_i|\right ]$ is a close proxy for minimizing runtime.

\paragraph{String search.}
In this setting, the set $S^*(q_i,p_i)$ consists of the smallest index $j$ such that $p_i = \left(q_i^{(j)}, q_i^{(j+1)}, \dots, q_i^{(j+m-1)}\right)$, which we denote $j_i^*$. This means that computing $S^*(q_i,p_i)$ is equivalent to computing $f(q_i, p_i)$. 
The mistake bound of $|S^*| \sqrt{T} = \left | \cup_{i = 1}^T \left\{j_i^*\right\}\right | \sqrt{T}$ given by Corollary~\ref{cor:mistake} is particularly strong when the matching indices are similar across string pairs. Corollary~\ref{cor:pruning_size} guarantees that the average size of the searched index set in each round is at most $|S^*| + \frac{2(n-|S^*|)}{\sqrt{T}}$. Since the expected average running time of our algorithm using the na\"ive string-matching algorithm to compute $f_{S_i}$ is $\E\left [\frac{m}{T}\sum_{i=1}^T |S_i|\right ]$, minimizing $\E\left [\frac{1}{T}\sum_{i=1}^T |S_i|\right ]$ amounts to minimizing runtime.

\section{Lower bound on the tradeoff between accuracy and runtime}\label{sec:lower}
We now prove a lower bound on the tradeoff between runtime and the number of mistakes made by any repeated algorithm. We analyze a shortest path problem with two nodes $s$ and $t$ connected by $m+1$ parallel edges ($E=[m+1]$). Thus, all paths are single edges.
For any $m\geq 1$ and $T>1$, consider the following distribution $\mu_{m, T}$ over $T$-tuples of edge weights in $\mathbb{R}^{(m+1)\times T}$: \begin{itemize}
\item The weight on edge $m+1$ is always 1/2.
\item An edge $e \in [m]$ and integer $r \in [2T]$ are chosen uniformly at random. The weight on edge $e$ is 1 on periods preceding $r$ and 0 from periods $r$ or later.
\item The weight on every other edge in $[m]\setminus \{e\}$ is 1 on every period.
\end{itemize}
Note that because $r \in [2T]$, with probability 1/2, $r>T$ and edge $m+1$ 
will be the unique shortest path ($S^*=\{m+1\}$) for all $T$ periods. Otherwise, $S^*=\{e,m+1\}$. We say that an algorithm $\calA$ inspects an edge $e$ on period $i$ if it examines the memory location associated with that edge. 

\begin{restatable}{theorem}{LB}\label{th:lowerbound}
Fixing $m \geq 1$ and any even integer $T > 1$, any repeated algorithm $\calA$ must satisfy:
$$\E[m+\text{number of inspections made by } \calA] \cdot \E[1 + \text{number of mistakes made by } \calA] \geq mT/8,$$
where the expectation is over the random edge weights and the randomness of the algorithm.
\end{restatable}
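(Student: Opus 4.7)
The plan is to first pass to deterministic algorithms. For a randomized $\calA$ with coins $\omega$, define $a(\omega)=\E_{(e,r)}[m+I\mid\omega]$ and $b(\omega)=\E_{(e,r)}[1+M\mid\omega]$; Cauchy--Schwarz gives $\E[a]\cdot\E[b]\ge(\E\sqrt{ab})^2$, so it suffices to prove $a(\omega)b(\omega)\ge mT/8$ for every $\omega$, i.e., for every deterministic $\calA$.

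Fix a deterministic $\calA$ and introduce the \emph{reference trajectory}: the execution of $\calA$ on the weights seen whenever $r>T$ (edge $m+1$ has weight $1/2$ and every edge in $[m]$ has weight $1$). Let $A_i^{\mathrm{ref}}\subseteq[m+1]$ and $y_i^{\mathrm{ref}}$ denote the inspections and output on round $i$ of this trajectory; set $k_e=|\{i:e\in A_i^{\mathrm{ref}}\}|$ and $J_0=\sum_{e\in[m]}k_e$. The key coupling observation is that on any realization $(e,r)$, the true trajectory agrees with the reference until the round $\tau(e,r)=\min\{i\ge r:e\in A_i^{\mathrm{ref}}\}$ (set to $\infty$ if no such $i$ exists), because until the algorithm first inspects $e$ at or after the switch time, every observation coincides with the reference. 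Since $r>T$ occurs with probability $1/2$ and the algorithm then reproduces the reference exactly, $\E[I]\ge J_0/2$ is immediate.

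For the mistake bound, on any round $i\in[r,\min(T,\tau(e,r)-1)]$ the output equals $y_i^{\mathrm{ref}}$ while the correct answer is $e$, giving a mistake unless $y_i^{\mathrm{ref}}=e$. Setting $U_{r,i}=\bigcup_{j=r}^{i}A_j^{\mathrm{ref}}\cap[m]$ and aggregating (and using the $r>T$ contribution $mT\cdot|\{i:y_i^{\mathrm{ref}}\ne m+1\}|$ to absorb the $y_i^{\mathrm{ref}}\in[m]$ correction), I would obtain
\[
\sum_{(e,r)}M(e,r)\ \ge\ \sum_{1\le r\le i\le T}\bigl(m-|U_{r,i}|\bigr)\ =\ \frac{mT(T+1)}{2}-\sum_{r\le i}|U_{r,i}|.
\]
The technical heart is bounding $\sum_{r\le i}|U_{r,i}|=\sum_e|\{(r,i):T_e\cap[r,i]\ne\emptyset\}|$ where $T_e=\{i:e\in A_i^{\mathrm{ref}}\}$. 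A gap-based count shows that for fixed $e$ this quantity is at most $k_e(T+1)^2/(2(k_e+1))$, maximized when $e$'s inspections are spread evenly. Summing over $e$ and applying Jensen's inequality to the concave map $k\mapsto k/(k+1)$ under $\sum_e k_e=J_0$ yields $\sum_{r\le i}|U_{r,i}|\le mJ_0(T+1)^2/(2(J_0+m))$, and hence
\[
\E[M]\ \ge\ \frac{(T+1)(mT-J_0)}{4T(J_0+m)}.
\]

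Combining $\E[I]\ge J_0/2$ with this bound and setting $y=J_0/m$, the target $(m+\E[I])(1+\E[M])\ge mT/8$ reduces after clearing denominators to the elementary polynomial inequality $(3T-1)y^2+(11T-2)y+T^2+10T\ge0$, which is manifest for $T\ge1$ and $y\ge0$. The main obstacle is the Jensen step: a naive union bound on $|\{(r,i):T_e\cap[r,i]\ne\emptyset\}|$ costs a factor of $\Theta(T^2)$ per inspection and is too weak to yield the constant $1/8$, so the gap-based count together with the concave-Jensen inequality are the ingredients I would set up most carefully to make the bound tight against algorithms that spread their inspections uniformly across edges and time.
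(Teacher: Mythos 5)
Your proposal is correct and follows essentially the same route as the paper's proof: both couple the true execution to the reference trajectory on the all-ones weights, lower-bound the expected inspections by half the reference inspection count, and lower-bound the expected mistakes by a convexity/Jensen argument over the gaps between inspections of each edge---indeed your per-edge gap count with Jensen applied to $k/(k+1)$ yields exactly the paper's intermediate bound $\E[\text{mistakes}]\geq \frac{(T+1)(mT-n)}{4T(m+n)}$, which the paper obtains by splitting the uninspected pairs into at most $m+n$ maximal intervals and applying convexity of $x\mapsto x^2$. The only cosmetic differences are your use of Cauchy--Schwarz to reduce to deterministic algorithms (the paper instead uses convexity of the region above the hyperbola $VW\geq mT/8$) and your explicit absorption of the $y_i^{\mathrm{ref}}=e$ correction, where the paper argues WLOG that the algorithm outputs edge $m+1$ until it sees a zero.
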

The total number of inspections the algorithm $\calA$ makes is clearly a lower bound on its total runtime, so Theorem~\ref{th:lowerbound} demonstrates a tradeoff between runtime and accuracy. This theorem is tight up to constant factors, as can be seen by the trivial algorithm that inspects every edge until it encounters a 0 on some edge $e$ and then outputs that edge henceforth, which makes no mistakes and runs in expected time $\Theta(mT)$. Conversely, the algorithm that always outputs edge $m+1$ does not make any inspections and makes $\Theta(T)$ expected mistakes.

Finally our algorithm, when run with fixed $p$, achieves at most $\Theta(1/p)$ expected mistakes and makes an expected $\Theta(pmT)$ number of inspections and hence matches this tradeoff, up to constant factors. The proof of Theorem \ref{th:lowerbound} is deferred until Appendix \ref{ap:lowerbound}.

\section{Experiments}\label{sec:experiments}
In this section, we present experimental results for shortest-path routing and linear programming.

\paragraph{Shortest-path routing.}
We test Algorithm~\ref{alg:0}'s performance on real-world street maps, which we access via Python's OSMnx package~\citep{Boeing17:OSMnx}. Each street is an edge in the graph and each intersection is a node. The edge's weight is the street's distance. We run our algorithm for 30 rounds (i.e., $T = 30$) with $p_i = 1/\sqrt{i}$ for all $i \in [T]$. On each round, we randomly perturb each edge's weight via the following procedure. Let $G = (V, E)$ be the original graph we access via Python's OSMnx package. Let $\vec{x} \in \R^{|E|}$ be a vector representing all edges' weights. On the $i^{th}$ round, we select a vector $\vec{r}_i \in \R^{|E|}$ such that each component is drawn i.i.d. from the normal distribution with a mean of 0 and a standard deviation of 1. We then define a new edge-weight vector $\vec{x}_i$ such that $x_i[j] = \ind_{\{x[j] + r_i[j] > 0\}}\left(x[j] + r_i[j]\right).$ In Appendix~\ref{app:experiments}, we experiment with alternative perturbation methods.

In Figures~\ref{fig:Pittsburgh_map} and \ref{fig:Pittsburgh_graph}, we illustrate our algorithm's performance in Pittsburgh. Figure~\ref{fig:Pittsburgh_map} illustrates the nodes explored by our algorithm over $T = 30$ rounds. The goal is to get from the upper to the lower star. The nodes colored grey are the nodes Dijkstra's algorithm would have visited if we had run Dijkstra's algorithm on all $T$ rounds. The nodes colored black are the nodes in the pruned subgraph after the $T$ rounds. Figure~\ref{fig:Pittsburgh_graph} illustrates the results of running our algorithm a total of 5000 times ($T = 30$ rounds each run). The top (orange) line shows the number of nodes Dijkstra's algorithm explored averaged over all 5000 runs. The bottom (blue) line shows the average number of nodes our algorithm explored. Our algorithm returned the incorrect path on a $0.068$ fraction of the $5000\cdot T = 150,000$ rounds. In Appendix~\ref{app:experiments}, we show a plot of the average pruned set size as a function of the number of rounds.

\begin{figure}
\centering
\begin{subfigure}{.3\textwidth}
\includegraphics[width=\textwidth]{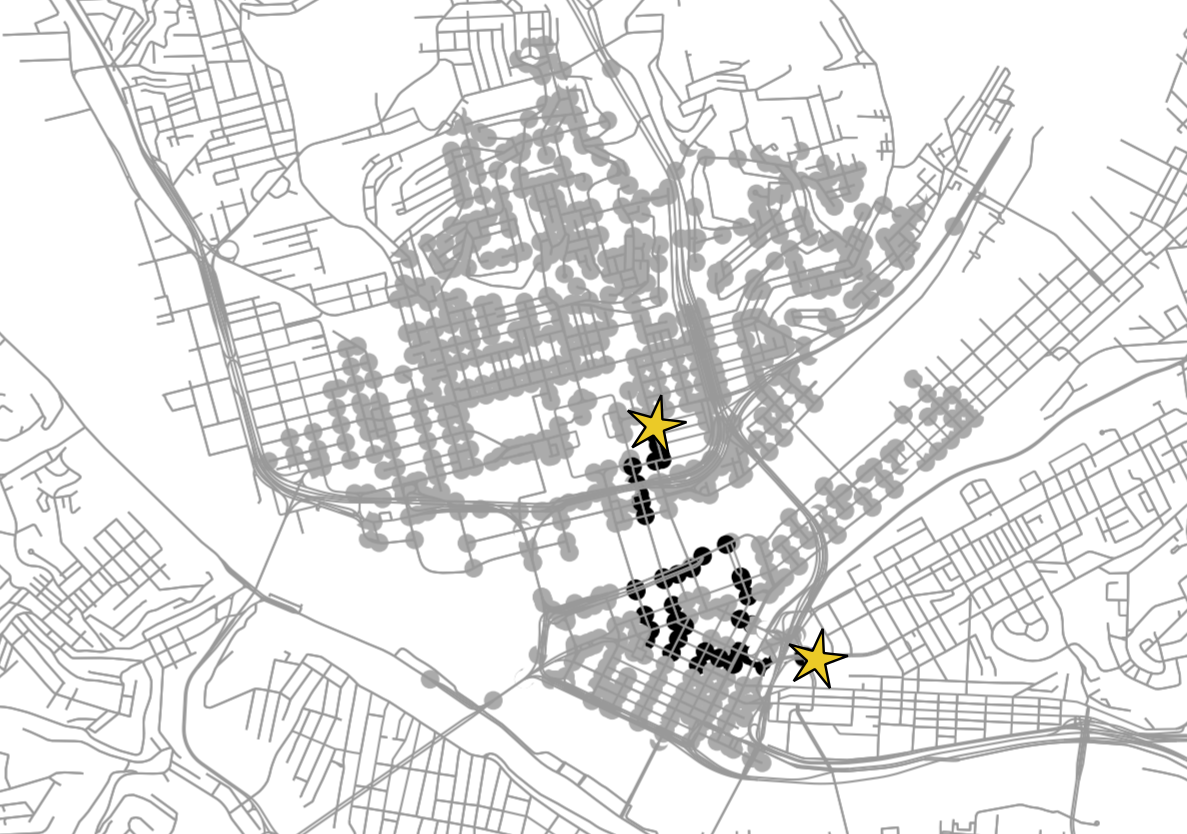}\centering
\caption{Grey nodes: the nodes visited by Dijkstra's algorithm. Black nodes: the nodes in our algorithm's pruned subgraph.}\label{fig:Pittsburgh_map}
\end{subfigure}\qquad
\begin{subfigure}{.3\textwidth}
\includegraphics[width=\textwidth]{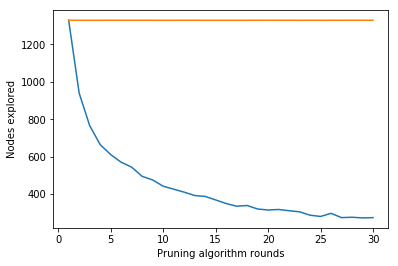}\centering
\caption{Top line: average number of nodes Dijkstra's algorithm explores. Bottom line: average number of nodes Algorithm~\ref{alg:0} explores.}\label{fig:Pittsburgh_graph}
\end{subfigure}\qquad
\begin{subfigure}{.3\textwidth}
\includegraphics[width=\textwidth]{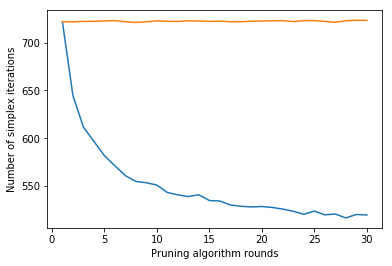}\centering
\caption{Top line: average number of simplex iterations the simplex algorithm makes. Bottom line: average number of simplex iterations Algorithm~\ref{alg:0} makes.}\label{fig:LP}
\end{subfigure}
\caption{Empirical evaluation of Algorithm~\ref{alg:0} applied to shortest-path routing in Pittsburgh (Figures~\ref{fig:Pittsburgh_map} and \ref{fig:Pittsburgh_graph}) and linear programming (Figure~\ref{fig:LP}).}
\label{fig:experiments}
\end{figure}

\paragraph{Linear programming.} We generate linear programming instances representing the linear relaxation of the combinatorial auction winner determination problem. See Appendix~\ref{app:experiments} for the specific form of this linear relaxation. We use the Combinatorial
Auction Test Suite (CATS)~\citep{Leyton00:Toward} to generate these instances. This test suite is meant to generate instances that are realistic and economically well-motivated. We use the CATS generator to create an initial instance with an objective function defined by a vector $\vec{x}$ and constraints defined by a matrix $A$ and a vector $\vec{b}$. On each round, we perturb the objective vector as we describe in Appendix~\ref{app:LP_experiments}.

From the CATS ``Arbitrary'' generator, we create an instance with 204 bids and 538 goods which has 204 variables and 946 constraints. We run Algorithm~\ref{alg:0} for 30 rounds ($T = 30$) with $p_i = 1/\sqrt{i}$ for all $i \in [T]$, and we repeat this 5000 times. In Figure~\ref{fig:LP}, the top (orange) line shows the number of simplex iterations the full simplex algorithm makes averaged over all 5000 runs. The bottom (blue) line shows the number of simplex iterations our algorithm makes averaged over all 5000 runs. We solve the linear program on each round using the SciPy default linear programming solver~\citep{Jones01:SciPy}, which implements the simplex algorithm~\citep{Dantzig16:Linear}.  Our algorithm returned the incorrect solution on a $0.018$ fraction of the $5000\cdot T = 150,000$ rounds. In Appendix~\ref{app:experiments}, we show a plot of the average pruned set size as a function of the number of rounds.

\section{Multiple solutions and approximations}\label{sec:multiple}
In this work, we have assumed that each problem has a unique solution, which we can enforce by  defining a canonical ordering on solutions. For string matching, this could be the first match in a string as opposed to any match. For shortest-path routing, it is not difficult to modify shortest-path algorithms to find, among the shortest paths, the one with lexicographically ``smallest'' description given some  ordering of edges. Alternatively, one might simply assume that there is exactly one solution, e.g., no ties in a shortest-path problem with real-valued edge weights. This latter solution is what we have chosen for the linear programming model, for simplicity.

It would be natural to try to extend our work to problems that have multiple solutions, or even to approximate solutions. 
However, addressing multiple solutions in repeated computation rapidly raises NP-hard challenges. To see this, consider a graph with two nodes, $s$ and $t$, connected by $m$ parallel edges. Suppose the goal is to find any shortest path and suppose that in each period, the edge weights are all 0 or 1, with at least one edge having weight 0. If $Z_i$ is the set of edges with 0 weight on period $i$, finding the smallest pruning which includes a shortest path on each period is trivially equivalent to set cover on the sets $Z_i$. Hence, any repeated algorithm handling problems with multiple solutions must address this computational hardness.

\section{Conclusion}
We propose an algorithm for quickly solving a series of related problems. Our algorithm learns irrelevant regions of the solution space that may be pruned across instances. With high probability, our algorithm makes few mistakes, and it may prune large swaths of the search space.
For problems where the solution can be checked much more quickly than found (such as linear programming), one can also check each solution and re-run the worst-case algorithm on the few errors to ensure zero mistakes. In other cases, there is a tradeoff between the mistake probability and runtime. 

\section*{Acknowledgments}
This work was supported in part by Israel Science Foundation (ISF) grant \#1044/16, a subcontract on the DARPA Brandeis Project, and the Federmann Cyber Security Center in conjunction with the Israel national cyber directorate.

\bibliography{online}

\begin{thebibliography}{19}
\providecommand{\natexlab}[1]{#1}
\providecommand{\url}[1]{\texttt{#1}}
\expandafter\ifx\csname urlstyle\endcsname\relax
  \providecommand{\doi}[1]{doi: #1}\else
  \providecommand{\doi}{doi: \begingroup \urlstyle{rm}\Url}\fi

\bibitem[Ailon et~al.(2011)Ailon, Chazelle, Clarkson, Liu, Mulzer, and
  Seshadhri]{AilonCCLMS11}
Nir Ailon, Bernard Chazelle, Kenneth~L. Clarkson, Ding Liu, Wolfgang Mulzer,
  and C.~Seshadhri.
\newblock Self-improving algorithms.
\newblock \emph{{SIAM} J. Comput.}, 40\penalty0 (2):\penalty0 350--375, 2011.

\bibitem[Awerbuch and Kleinberg(2008)]{AwerbuchK08}
Baruch Awerbuch and Robert Kleinberg.
\newblock Online linear optimization and adaptive routing.
\newblock \emph{J. Comput. Syst. Sci.}, 74\penalty0 (1):\penalty0 97--114,
  2008.

\bibitem[Balcan et~al.(2017)Balcan, Nagarajan, Vitercik, and
  White]{Balcan17:Learning}
Maria-Florina Balcan, Vaishnavh Nagarajan, Ellen Vitercik, and Colin White.
\newblock Learning-theoretic foundations of algorithm configuration for
  combinatorial partitioning problems.
\newblock \emph{Proceedings of the Conference on Learning Theory (COLT)}, 2017.

\bibitem[Balcan et~al.(2018{\natexlab{a}})Balcan, Dick, Sandholm, and
  Vitercik]{Balcan18:Learning}
Maria-Florina Balcan, Travis Dick, Tuomas Sandholm, and Ellen Vitercik.
\newblock Learning to branch.
\newblock \emph{Proceedings of the International Conference on Machine Learning
  (ICML)}, 2018{\natexlab{a}}.

\bibitem[Balcan et~al.(2018{\natexlab{b}})Balcan, Dick, and
  Vitercik]{Balcan18:Dispersion}
Maria-Florina Balcan, Travis Dick, and Ellen Vitercik.
\newblock Dispersion for data-driven algorithm design, online learning, and
  private optimization.
\newblock \emph{Proceedings of the IEEE Symposium on Foundations of Computer
  Science (FOCS)}, 2018{\natexlab{b}}.

\bibitem[Banerjee and Roy(2015)]{Banerjee15efficientlysolving}
Ashis~Gopal Banerjee and Nicholas Roy.
\newblock Efficiently solving repeated integer linear programming problems by
  learning solutions of similar linear programming problems using boosting
  trees.
\newblock Technical Report MIT-CSAIL-TR-2015-00, {MIT} Computer Science and
  Artificial Intelligence Laboratory, 2015.

\bibitem[Boeing(2017)]{Boeing17:OSMnx}
Geoff Boeing.
\newblock {OSM}nx: New methods for acquiring, constructing, analyzing, and
  visualizing complex street networks.
\newblock \emph{Computers, Environment and Urban Systems}, 65:\penalty0
  126--139, 2017.

\bibitem[Cesa-Bianchi and Lugosi(2006)]{cesa2006prediction}
Nicolo Cesa-Bianchi and G{\'a}bor Lugosi.
\newblock \emph{Prediction, learning, and games}.
\newblock Cambridge university press, 2006.

\bibitem[Clarkson et~al.(2014)Clarkson, Mulzer, and Seshadhri]{ClarksonMS14}
Kenneth~L. Clarkson, Wolfgang Mulzer, and C.~Seshadhri.
\newblock Self-improving algorithms for coordinatewise maxima and convex hulls.
\newblock \emph{{SIAM} J. Comput.}, 43\penalty0 (2):\penalty0 617--653, 2014.

\bibitem[Cohen-Addad and Kanade(2017)]{Cohen-Addad17:Online}
Vincent Cohen-Addad and Varun Kanade.
\newblock {Online Optimization of Smoothed Piecewise Constant Functions}.
\newblock In \emph{Proceedings of the International Conference on Artificial
  Intelligence and Statistics (AISTATS)}, 2017.

\bibitem[Dantzig(2016)]{Dantzig16:Linear}
George Dantzig.
\newblock \emph{Linear programming and extensions}.
\newblock Princeton {U}niversity {P}ress, 2016.

\bibitem[Deb et~al.(2006)Deb, Shah, and et~al.]{Deb06fastmatching}
Supratim Deb, Devavrat Shah, and et~al.
\newblock Fast matching algorithms for repetitive optimization: An application
  to switch scheduling.
\newblock In \emph{Proceedings of the Conference on Information Sciences and
  Systems (CISS)}, 2006.

\bibitem[Gupta and Roughgarden(2017)]{GuptaR17}
Rishi Gupta and Tim Roughgarden.
\newblock A {PAC} approach to application-specific algorithm selection.
\newblock \emph{{SIAM} J. Comput.}, 46\penalty0 (3):\penalty0 992--1017, 2017.

\bibitem[Jones et~al.(2001--)Jones, Oliphant, Peterson, et~al.]{Jones01:SciPy}
Eric Jones, Travis Oliphant, Pearu Peterson, et~al.
\newblock {SciPy}: Open source scientific tools for {Python}, 2001--.
\newblock URL \url{http://www.scipy.org/}.
\newblock [Online; accessed January 2019].

\bibitem[Kalai and Vempala(2005)]{KalaiV05}
Adam~Tauman Kalai and Santosh Vempala.
\newblock Efficient algorithms for online decision problems.
\newblock \emph{J. Comput. Syst. Sci.}, 71\penalty0 (3):\penalty0 291--307,
  2005.

\bibitem[Kleinberg et~al.(2017)Kleinberg, Leyton-Brown, and
  Lucier]{Kleinberg17:Efficiency}
Robert Kleinberg, Kevin Leyton-Brown, and Brendan Lucier.
\newblock Efficiency through procrastination: Approximately optimal algorithm
  configuration with runtime guarantees.
\newblock In \emph{Proceedings of the International Joint Conference on
  Artificial Intelligence (IJCAI)}, 2017.

\bibitem[Leyton-Brown et~al.(2000)Leyton-Brown, Pearson, and
  Shoham]{Leyton00:Toward}
Kevin Leyton-Brown, Mark Pearson, and Yoav Shoham.
\newblock Towards a universal test suite for combinatorial auction algorithms.
\newblock In \emph{Proceedings of the ACM Conference on Electronic Commerce
  (ACM-EC)}, 2000.

\bibitem[Littlestone(1987)]{Littlestone87}
Nick Littlestone.
\newblock Learning quickly when irrelevant attributes abound: {A} new
  linear-threshold algorithm.
\newblock \emph{Machine Learning}, 2\penalty0 (4):\penalty0 285--318, 1987.

\bibitem[Weisz et~al.(2018)Weisz, Gy{\"{o}}rgy, and
  Szepesv{\'{a}}ri]{Weisz18:LEAPSANDBOUNDS}
Gell{\'{e}}rt Weisz, Andr{\'{e}}s Gy{\"{o}}rgy, and Csaba Szepesv{\'{a}}ri.
\newblock {LEAPSANDBOUNDS:} {A} method for approximately optimal algorithm
  configuration.
\newblock In \emph{Proceedings of the International Conference on Machine
  Learning (ICML)}, 2018.

\end{thebibliography}
\bibliographystyle{plainnat}
\appendix

\section{Proofs from Section~\ref{sec:expansive}}\label{app:expansive}
\tight*

\begin{proof}
We base our construction on the shortest path problem. There is a fixed graph $G = (V,E)$ where $V = \{s,t\}$ consists of two vertices labeled $s$ and $t$ and $E$ consists of $k$ edges labeled $\{1, \dots, k\}$, each of which connects $s$ and $t$. The set $X = \left\{\vec{x}^{(1)}, \dots, \vec{x}^{(k)}\right\} \subset \{0,1\}^k$ consists of $k$ possible edge weightings, where \[x^{(i)}[j] = \begin{cases}0 &\text{if } i = j\\
1 &\text{if }i \not=j.\end{cases}\] In other words, under the edge weights $\vec{x}_i$, the edge $i$ has a weight of 0 and all other edges $j\not= i$ have a weight of 1. The shortest path from $s$ to $t$ under the edge weights $\vec{x}^{(i)}$ consists of the edge $i$ and has a total weight of 0. Therefore, $f\left(\vec{x}^{(i)}\right) = S^*\left(\vec{x}^{(i)}\right) = \{i\}$. Given any non-empty subset of edges $S \subseteq E$, $f_S\left(\vec{x}^{(i)}\right) = \{i\}$ if $i \in S$, and otherwise breaks ties according to a fixed but arbitrary tie-breaking rule.

To construct the random sequence of inputs from the theorem statement, in each round $i$ we choose the input $\vec{x}_i$ uniformly at random from the set $X$. Therefore, letting $S^* = \cup_{i = 1}^TS^*(\vec{x}_i)$, $\E[\left|S^*\right|] = k\left(1 - \left(1 - \frac{1}{k}\right)^T\right)$, because when throwing $T$ balls uniformly at random into $k$ bins, the expected number of empty bins is $k\left(1 - \frac{1}{k}\right)^T$.

We now prove that $$\E[M_T(\calA^*, \vec{x}_{1:T})] = \frac{k(1-p)(1 - (1-p/k)^T)}{p},$$
where the expectation is over the sequence of inputs. To this end, let $S_1, \dots, S_T$ be the sets such that at round $i$, Algorithm~\ref{alg:0} outputs $f_{S_i}(\vec{x}_i)$.
\begin{align*}&\E[M_T(\calA^*, \vec{x}_{1:T})]\\
=\text{ }& \sum_{i = 1}^T \pr[\text{Mistake made at round }i]\\
=\text{ }& \sum_{i = 1}^T \sum_{j = 1}^k\pr[S_i = \bar{S}_i, \, \vec{x}^{(j)} = \vec{x}_i \text{, and } j\not\in \bar{S}_i]\\
=\text{ }& \sum_{i = 1}^T \sum_{j = 1}^k\pr[j\not\in \bar{S}_i \mid S_i = \bar{S}_i \text{ and } \vec{x}^{(j)} = \vec{x}_i]\cdot \pr[S_i = \bar{S}_i \text{ and } \vec{x}^{(j)}  = \vec{x}_i]\\
=\text{ }& \sum_{i = 1}^T \sum_{j = 1}^k\pr[j\not\in \bar{S}_i \mid S_i = \bar{S}_i \text{ and } \vec{x}^{(j)}  = \vec{x}_i]\cdot \pr[S_i = \bar{S}_i]\cdot \pr[\vec{x}^{(j)}  = \vec{x}_i]\\
=\text{ }& \frac{(1-p)}{k}\sum_{i = 1}^T \sum_{j = 1}^k\pr[j\not\in \bar{S}_i \mid S_i = \bar{S}_i \text{ and } \vec{x}^{(j)}  = \vec{x}_i].
\end{align*}

Analyzing a single summand,
\begin{align*}
&\pr[j\not\in \bar{S}_i \mid S_i = \bar{S}_i \text{ and } \vec{x}^{(j)} = \vec{x}_i]\\
=\text{ }&\sum_{t = 0}^{i-1} \Pr[\vec{x}_j \text{ is the input on exactly } t \text{ rounds before round } i \text{ and on those rounds, } S_\ell = \bar{S}_\ell]\\
=\text{ }&\sum_{t = 0}^{i-1} {i-1 \choose t}\left(\frac{1-p}{k}\right)^t \left(1 - \frac{1}{k}\right)^{i - 1 - t}\\
=\text{ }& \left(1 - \frac{p}{k}\right)^{i-1}.
\end{align*}
Therefore, \begin{align*}\E[M_T(\calA^*, \vec{x}_{1:T})] &= \frac{1-p}{k}\sum_{i = 1}^{T} \sum_{j = 1}^k\left(1 - \frac{p}{k}\right)^{i-1}\\
&= (1-p)\sum_{i =1}^{T}\left(1 - \frac{p}{k}\right)^{i-1}\\
&= \frac{k(1-p)\left(1 - \left(1 - p/k\right)^{T}\right)}{p},\end{align*} as claimed.
\end{proof}

\section{Proof of lower bound}\label{ap:lowerbound}
\LB*

\begin{proof}
First, consider a deterministic algorithm $\calA$. We say that $\calA$ \emph{inspects edge $e$ on period $i$} if it examines the memory location for edge $e$'s weight on period $i$. Let $n$ be the total number of edges that would be inspected during the first $T$ periods if $r>T$, which is well defined since when $r>T$, the weights are all fixed and hence the choices of the deterministic algorithm on the first $T$ periods are fixed. In fact, since $r>T$ with probability $1/2$ the expected number of inspections is at least $n/2$. 

We next observe that before $\calA$ has inspected an edge whose weight is 0, WLOG we may assume that $\calA$ chooses edge $m+1$ -- this minimizes its expected number of mistakes since edge $m+1$ has probability at least 1/2 of being the best conditional on any number of weight-1 inspections. (Once it inspects a 0 on $e$, of course runtime and mistakes are minimized by simply choosing $e$ henceforth without any further inspections or calculations.)

Let $B\subseteq [m]\times[T]$ be the set of edges and times that are not inspected by $\calA$ (excluding edge $m+1$). We now bound the expected number of mistakes in terms of $b=|B|=mT-n$. We no longer assume $r>T$, but we can still use $n$ as it is well defined. For each $(e', r') \in B$, there will be a mistake on $r'$ if $e=e'$ and $r\leq r'$ and $(e,r), (e,r+1), \ldots, (e,r') \in B$. Hence, we divide $B$ into a collection $\mathcal{I}$ of maximal  consecutive intervals, $I_{aij}=\{(a,i), (a,i+1), \ldots, (a,j)\} \subseteq B$, where either $i=1$ or $(a,i-1) \not\in B$ and $j=T$ or $(a,j+1)\not\in B$. Let $b=|B|$ denote the total length of all such intervals, which is $b=mT-n$. For any such interval $I\in \mathcal{I}$, there is a probability of $|I|/(2mT)$ that $(e,r)\in I$, because there is a 1/2 probability that $r \leq T$ and, conditional on this, $(e,r)$ is uniform from $[m]\times [T]$. Moreover, conditional on $(e,r) \in I$, the expected number of mistakes is $(1+|I|)/2$ because this is the expected length of the part of the interval that is at or after $r$. Hence, the expected number of mistakes is,
\begin{equation}\label{eq:convex}
\E[\text{total mistakes}] \geq \sum_{I \in \mathcal{I}} \frac{(1+|I|)|I|}{4mT}=\frac{b}{4mT} + \sum_{I \in \mathcal{I}} \frac{|I|^2}{4mT}.\end{equation}
The number of intervals is at most $N = m + n$ because: (a) with no inspections, there are $m$ intervals, (b) each additional inspection can create at most 1 interval by splitting an interval into two, and (c) there are $n$ edge-period inspections. By the convexity of the function $f(x)=x^2$ and (\ref{eq:convex}), the lower-bound on expected number of mistakes from above is at least,
$$\E_{(e,r)}[\text{total mistakes}] \geq \frac{b}{4mT} + N \frac{(b/N)^2}{4mT} = \frac{b(N+b)}{4mTN} = \frac{(mT-n)(m + mT)}{4(m + n)mT} \geq \frac{mT}{4(m+n)}-1.$$
using $b=mT-n$ and simple arithmetic.

This gives the lower bound on,
$$\E_{(e,r)}[m+\text{number of inspections}] \cdot \E_{(e,r)}[1 + \text{total mistakes}]\geq (m+n/2)\frac{mT}{4(m+n)} \geq \frac{mT}{8},$$
as required for deterministic $\calA$. To complete the proof, we need to show that the above holds, in expectation, for randomized $\calA$ as well. For a given randomized algorithm $\calA_z$ with random bits $z$, we can consider two quantities,
$$V_z = \E_{(e,r)}[m+\text{number of inspections of } \calA_z] \text{ and } W_z = \E_{(e,r)}[1 + \text{total mistakes of } \calA_z].$$
Now, we know that for all specific $z$, $\calA_z$ is a deterministic algorithm and hence $V_z W_z \geq mT/8$ for all $z$. Finally note that the set $S=\{(V,W)\in \mathbb{R}_+^2~|~ VW \geq mT/8\}$ is a convex set and since $(V_z, W_z) \in S$ for all $z$,  $(\E_z[V_z],\E_z[W_z]) \in S$ by convexity.
\end{proof}

\section{Additional information about experiments}\label{app:experiments}
\begin{figure}
\centering
\begin{subfigure}{.47\textwidth}
\includegraphics[width=\textwidth]{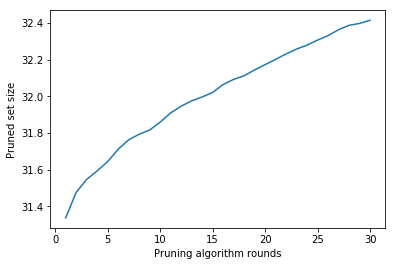}\centering
\caption{Average pruned set size for shortest-path routing.}\label{fig:SP_prunedsetsize}
\end{subfigure}\qquad
\begin{subfigure}{.47\textwidth}
\includegraphics[width=\textwidth]{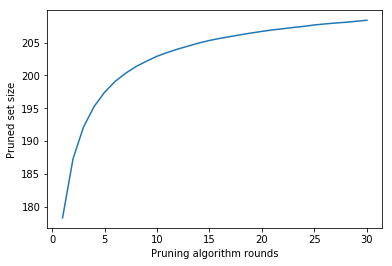}\centering
\caption{Average pruned set size for linear programming.\newline}\label{fig:LP_prunedsetsize}
\end{subfigure}
\caption{Average size of the pruned set $\bar{S}_i$ in Algorithm~\ref{alg:0}.}
\label{fig:pruning}
\end{figure}

Figure~\ref{fig:pruning} illustrates the average size of the pruned set $\bar{S}_i$ in Algorithm~\ref{alg:0}, which we ran a total of 5000 times, with $T = 30$ rounds each run. Figure~\ref{fig:SP_prunedsetsize} corresponds to shortest-path routing and Figure~\ref{fig:LP_prunedsetsize} corresponds to linear programming, with the same setup as described in Section~\ref{sec:experiments}.

\subsection{Shortest-path routing}\label{app:SP_experiments}
\begin{figure}
\centering
\begin{subfigure}{.47\textwidth}
\includegraphics[width=\textwidth]{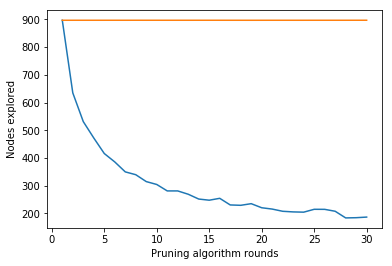}\centering
\caption{Gaussian perturbation with a standard deviation of $1/2$. Top line: average number of nodes Dijkstra's algorithm explores. Bottom line: average number of nodes Algorithm~\ref{alg:0} explores.}\label{fig:SP_iterations_gaussian_0_5}
\end{subfigure}\qquad
\begin{subfigure}{.47\textwidth}
\includegraphics[width=\textwidth]{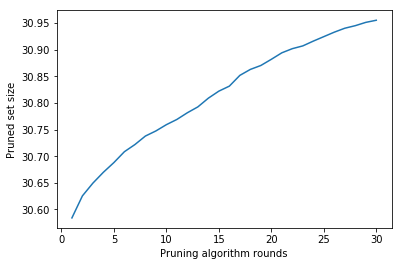}\centering
\caption{Gaussian perturbation with a standard deviation of $1/2$.  Average pruned set size for shortest path routing.\newline}\label{fig:SP_prunedsetsize_gaussian_0_5}
\end{subfigure}
\begin{subfigure}{.47\textwidth}
\includegraphics[width=\textwidth]{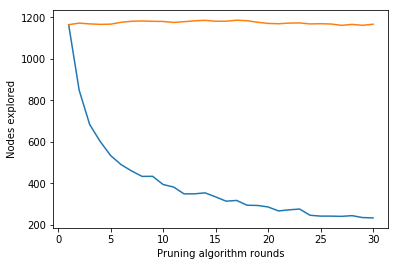}\centering
\caption{Uniform perturbation with support $[-1/2, 1/2]$. Top line: average number of nodes Dijkstra's algorithm explores. Bottom line: average number of nodes Algorithm~\ref{alg:0} explores.}\label{fig:SP_iterations_uniform_0_5}
\end{subfigure}\qquad
\begin{subfigure}{.47\textwidth}
\includegraphics[width=\textwidth]{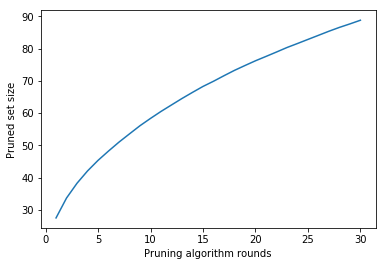}\centering
\caption{Uniform perturbation with support $[-1/2, 1/2]$.  Average pruned set size for shortest path routing.\newline\newline}\label{fig:SP_prunedsetsize_uniform_0_5}
\end{subfigure}
\begin{subfigure}{.47\textwidth}
\includegraphics[width=\textwidth]{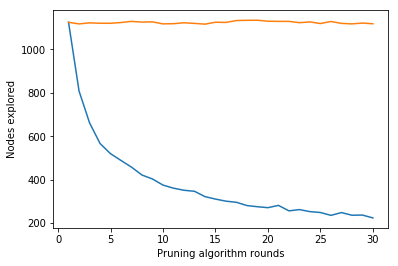}\centering
\caption{Uniform perturbation with support $[-1, 1]$. Top line: average number of nodes Dijkstra's algorithm explores. Bottom line: average number of nodes Algorithm~\ref{alg:0} explores.}\label{fig:SP_iterations_uniform_1}
\end{subfigure}\qquad
\begin{subfigure}{.47\textwidth}
\includegraphics[width=\textwidth]{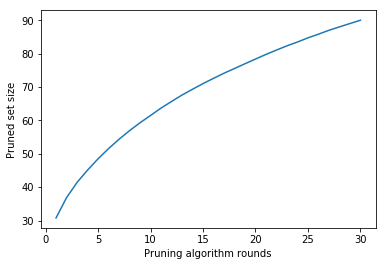}\centering
\caption{Uniform perturbation with support $[-1/2, 1/2]$.  Average pruned set size for shortest path routing.\newline\newline}\label{fig:SP_prunedsetsize_uniform_1}
\end{subfigure}
\caption{Shortest path routing experiments using varying perturbation methods.}
\label{fig:perturbation}
\end{figure}
In Figure~\ref{fig:perturbation}, we present several additional experiments using varying perturbation methods. Figures~\ref{fig:SP_iterations_gaussian_0_5} and \ref{fig:SP_prunedsetsize_gaussian_0_5} have the same experimental setup as in the main body except we employ a Gaussian distribution with smaller variance. We run our algorithm for thirty rounds (i.e., $T = 30$) with $p_i = 1/\sqrt{i}$ for all $i \in [T]$. On each round, we randomly perturb each edge's weight via the following procedure. Let $G = (V, E)$ be the original graph we access via Python's OSMnx package. Let $\vec{x} \in \R^{|E|}$ be a vector representing all edges' weights. On the $i^{th}$ round, we select a vector $\vec{r}_i \in \R^{|E|}$ such that each component is drawn i.i.d. from the normal distribution with a mean of 0 and a standard deviation of $1/2$. We then define a new edge-weight vector $\vec{x}_i$ such that $x_i[j] = \ind_{\{x[j] + r_i[j] > 0\}}\left(x[j] + r_i[j]\right).$ Our algorithm returned the incorrect path on a $0.034$ fraction of the $5000\cdot T = 150,000$ rounds.

In the remaining panels of Figure~\ref{fig:perturbation}, we employ the uniform distribution rather than the Gaussian distribution. In Figures~\ref{fig:SP_iterations_uniform_0_5} and \ref{fig:SP_prunedsetsize_uniform_0_5}, 
we run our algorithm for thirty rounds (i.e., $T = 30$) with $p_i = 1/\sqrt{i}$ for all $i \in [T]$. On each round, we randomly perturb each edge's weight via the following procedure. Let $G = (V, E)$ be the original graph we access via Python's OSMnx package. Let $\vec{x} \in \R^{|E|}$ be a vector representing all edges' weights. Let $w_i = \min\{x[i], 1/2\}$. On the $i^{th}$ round, we select a vector $\vec{r}_i \in \R^{|E|}$ such that each component is drawn from the uniform distribution over $[-w_i, w_i]$. We then define a new edge-weight vector $\vec{x}_i$ such that $x_i[j] = x[j] + r_i[j].$ Our algorithm returned the incorrect path on a $0.003$ fraction of the $5000\cdot T = 150,000$ rounds.

In Figures~\ref{fig:SP_iterations_uniform_1} and \ref{fig:SP_prunedsetsize_uniform_1}, we use the same procedure except $w_i = \min\{x[i], 1\}$. In that setting, our algorithm returned the incorrect path on a $0.001$ fraction of the $5000\cdot T = 150,000$ rounds.

\subsection{Linear programming}\label{app:LP_experiments}
We use the CATS generator to create an initial instance with an objective function defined by a vector $\vec{x}$ and constraints defined by a matrix $A$ and a vector $\vec{b}$. On the $i^{th}$ round, we select a new objective vector $\vec{x}_i$ such that each component $x_i[j]$ is drawn independently from the normal distribution with a mean of $x[j]$ and a standard deviation of 1. We run our algorithm for twenty rounds (i.e., $T = 30$) with $p_i = \frac{1}{\sqrt{i}}$ for all $i \in [T]$.

\paragraph{Winner determination.}
Suppose there is a set $\{1, \dots, m\}$ of items for sale and a set $\{1, \dots, n\}$ of buyers. In a combinatorial auction, each buyer $i$ submits bids $v_i(b)$ for any number of bundles $b \subseteq \{1, \dots, m\}$. The goal of the winner determination problem is to allocate the goods among the bidders so as to maximize \emph{social welfare}, which is the sum of the buyers' values for the bundles they are allocated. We can model this problem as a integer program by assigning a binary variable $x_{i,b}$ for every buyer $i$ and every bundle $b$ they submit a bid $v_i(b)$ on. The variable $x_{i,b}$ is equal to 1 if and only if buyer $i$ receives the bundle $b$. Let $B_i$ be the set of all bundles $b$ that buyer $i$ submits a bid on. An allocation is feasible if it allocates no item more than once ($\sum_{i = 1}^n \sum_{b \in B_i, j \ni b} x_{i,b} \leq 1$ for all $j \in \{1, \dots, m\}$) and if each bidder receives at most one bundle ($\sum_{b \in B_i} x_{i,b} \leq 1$ for all $i \in \{1, \dots, n\}$).
Therefore, the integer program is:
\[
\begin{array}{lll}
\textnormal{maximize} & \sum_{i = 1}^n \sum_{b \in B_i} v_i(b) x_{i,b} &\\
\textnormal{s.t.} & \sum_{i = 1}^n \sum_{b \in B_i, j \ni b} x_{i,b} \leq 1 &\forall j \in [m]\\
&\sum_{b \in B_i} x_{i,b} \leq 1 &\forall i \in [n]\\
& x_{i,b} \in \{0,1\} &\forall i \in [n], b \in B_i.
\end{array}
\]

To transform this integer program into a linear program, we simply require that $x_{i,b} \in [0,1]$ for all $i \in [n]$ and all $b \in B_i$.

\end{document}